\documentclass[11pt]{article}
\usepackage[a4paper,total={6.5in,9.5in}]{geometry}


\usepackage[utf8]{inputenc} 
\usepackage[T1]{fontenc} 
\usepackage{hyperref}
\hypersetup{
    colorlinks,
    linkcolor={red!50!black},
    citecolor={blue!50!black},
    urlcolor={blue!80!black},
}

\usepackage{url}       
\usepackage{booktabs}       
\usepackage{amsfonts}      
\usepackage{nicefrac}  
\usepackage{microtype} 
\PassOptionsToPackage{usenames,dvipsnames}{xcolor} 

\usepackage{amsmath,amssymb,amsthm}
\usepackage{thm-restate}
\usepackage[capitalise]{cleveref}
\usepackage{wrapfig, graphicx}
\usepackage{xargs}
\usepackage[colorinlistoftodos,prependcaption,textsize=tiny]{todonotes}
\usepackage{selectp}

\usepackage{caption,subcaption}
\usepackage{enumitem}
\usepackage{svg}
\usepackage{bbold}

\def\R{\mathbb{R}}

\DeclareMathOperator*{\bbP}{\mathbb{P}}

\def\cB{\mathcal{B}}

\def\cS{\mathcal{S}}

\def\cX{\mathcal{X}}
\def\cY{\mathcal{Y}}

\def\ie{\textit{i.e.}}

\DeclareMathOperator*\E{\mathbb{E}}

\def\1{\mathbf{1}}
\def\0{\mathbf{0}}

\newcommand{\eg}{\textit{e.g.}}

\DeclareMathOperator*{\argmin}{arg\,min}
\DeclareMathOperator*{\argmax}{arg\,max}

\newcommand{\rom}[1]{%
  \textup{\uppercase\expandafter{\romannumeral#1}}%
}

\newtheorem{example}{Example}[section]

\crefformat{footnote}{#2\footnotemark[#1]#3}
\newcommandx{\ambar}[2][1=]{\todo[linecolor=orange,backgroundcolor=orange!25,bordercolor=orange,#1]{AP: #2}}





\usepackage{amsmath}
\usepackage{amssymb}
\usepackage{mathtools}
\usepackage{amsthm}
\usepackage{makecell}
\usepackage{multirow}
\usepackage{natbib}


\theoremstyle{plain}
\newtheorem{theorem}{Theorem}[section]

\newtheorem{lemma}[theorem]{Lemma}

\theoremstyle{definition}
\newtheorem{definition}[theorem]{Definition}

\theoremstyle{remark}

\newcommand{\vol}[1]{{\rm Vol} \left( #1 \right)}
\newcommand{\red}[1]{{#1}}


\title{Certified Robustness against Sparse Adversarial Perturbations via Data Localization}
\date{}

\author{
    Ambar~Pal\footnote{Corresponding author: ambar@jhu.edu}~\footnote{Department of Computer Science, Johns Hopkins University, Baltimore, MD, USA}~\footnote{Mathematical Institute for Data Science (MINDS), Johns Hopkins University, Baltimore, MD, USA} 
    \and René~Vidal\footnote{Center for Innovation in Data Engineering and Science (IDEAS), University of Pennsylvania, Philadelphia, USA} 
    \and Jeremias~Sulam\footnotemark[3]~\footnote{Department of Biomedical Engineering, Johns Hopkins University, Baltimore, MD, USA}
}

\begin{document}
\maketitle



\begin{abstract}
Recent work in adversarial robustness suggests that natural data distributions are localized, \ie, they place high probability in small volume regions of the input space, and that this property can be utilized for designing classifiers with improved robustness guarantees for $\ell_2$-bounded perturbations. Yet, it is still unclear if this observation holds true for more general metrics. In this work, we extend this theory to $\ell_0$-bounded adversarial perturbations, where the attacker can modify a few pixels of the image but is unrestricted in the magnitude of perturbation, and we show necessary and sufficient conditions for the existence of $\ell_0$-robust classifiers. Theoretical certification approaches in this regime essentially employ voting over a large ensemble of classifiers. Such procedures are combinatorial and expensive or require complicated certification techniques. In contrast, a simple classifier emerges from our theory, dubbed \textsc{Box-NN}, which naturally incorporates the geometry of the problem and improves upon the current state-of-the-art in certified robustness against sparse attacks for the MNIST and Fashion-MNIST datasets.
\end{abstract}
\section{Introduction}
It is by now well known that adversarial attacks affect Machine Learning (ML) systems that can potentially be used for security sensitive applications. However, despite significant efforts on robustifying ML models against adversarial attacks, it has been observed that their performance on most tasks under adversarial perturbation is not close to human levels. This motivated researchers to obtain theoretical impossiblity results for adversarial robustness \cite{shafahi2018inevitable,dohmatob,dai2022fundamental}, which state that for general data distributions, no robust classifier exists against adversarial perturbations, even when the adversary is limited to making small $\ell_p$-norm-bounded perturbations. However, such results are seemingly in conflict with the fact that humans can classify most natural images quite well under small $\ell_p$-norm-bounded perturbations. Even more, there is a rich literature on certified robustness, \eg, \cite{zhang2018efficient,cohen2019certified,pal2020game,fischer2020certified,jeong2020consistency,jia2022almost,pfrommer2023projected,salman2022certified,eirasancer,pal2023understanding}, where the goal is to obtain and analyze methods with provable guarantees on their robustness under adversarial attacks.

\citet{pal2023concentration} recently provided a solution to this apparent conflict, noting that existing impossibility results become vacuous when the data distribution is such that a large probability mass is concentrated on very small volume in the input space, a property they call $(C, \epsilon, \delta)$-concentration. This characterization implies that at least $1 - \delta$ probability mass is found in a region of volume at most $C e^{-n\epsilon}$ for small $\delta \approx 0$ and large $\epsilon$. As an example, this property dictates that sampling a random $224 \times 224$ dimensional image is extremely likely to \emph{not} be a natural image. This property is intuitively satisfied for natural datasets like ImageNet, and \citet{pal2023concentration} formally show that whenever a classifier robust against small $\ell_2$-bounded attacks exists for a data distribution (e.g., humans for natural images), this distribution must be concentrated. This shows that indeed, robust classifiers against $\ell_2$ attacks can be obtained for natural image distributions, and there is no impossibility. 

While these results are encouraging, attacks that are bounded in Euclidean norm have nice analytical properties that facilitated the results in \citet{pal2023concentration}. In this work, we seek to understand if similar notions can provide insights on provable defenses against \emph{sparse} adversarial attacks (bounded in their $\ell_0$ distance) where the adversary is limited to modifying a few pixels on the image, but those pixels can be modified in an unbounded fashion. 
Even though for humans it seems trivial to correctly classify a natural image corrupted in a few pixels, this problem has stood out as a particularly hard task for machine learning models. The difference is extreme: \citet{su2019one} demonstrated that adversarially modifying a \emph{single} pixel leads to large performance degradation of many state of the art image recognition models. Standard ideas for improving robustness, like adversarial training, seem to be empirically ineffective against sparse attacks. 
Since then, researchers have resorted to enumerating a large number of subsets of the input pixels, and taking a majority vote over the class predicted from each subset, as a means of obtaining classifiers robust to sparse attacks. The resultant methods \citep{levine2020robustness} are expensive, and need probabilistic certificates due to the combinatorial blow-up in the number of subsets needed as the number of attacked pixels increases. Follow-up work by \citet{jia2022almost} has employed complicated certification schemes to reduce the slack in these certificates, while still remaining computationally expensive. Most recently, \citet{hammoudeh2023feature} carefully selected these subsets to speed up the certificate computation. However, none of these existing methods utilizes the geometry of the underlying data distribution highlighted by our results. Departing from this stream of research, we propose a classifier that closely utilizes this underlying geometry to obtain 
robustness certificates. As a result, we provide a classifier that is lighter and simpler than all existing works, and an associated certification algorithm with  
$\ell_0$ certificates that are better than prior work.

Our proof techniques extend results in \citet{pal2023concentration} to sparse adversarial attacks.
In practice, one can always project the pixel values to lie in some predefined range, say $[0, 1]$, before classification, so we can consider adversarial perturbations to lie within $[0, 1]^n$ without any loss of generality. In other words, our adversary at power $\epsilon$ is allowed to modify an image from $x$ to $x'$ such that $\|x - x'\|_0 \leq \epsilon, \|x'\|_\infty \leq 1$. The techniques in \citet{pal2023concentration} break down under such an adversary, as their first assumption is to restrict attention to adversarial perturbations $v$ such that $x + v$ cannot lie $\epsilon$-close to the boundary of the image domain. In our case, the geometry of the problem is radically different: even a perturbation of size $1$ is sufficient to take any image to the boundary of the domain $[0, 1]^n$ (simply perturb any pixel to $1$). 
As a result, although we are motivated by \citet{pal2023concentration}, our theory and certification algorithms are markedly different from those in that work.

In the above setting, we show that whenever there exists a classifier robust to adversarial modification of a few entries in the input, the underlying data distribution \red{places a large mass, i.e., \emph{localizes},} on low-volume subsets of the input space. We further show that the converse holds too, albeit with a strengthening of the \red{localization} condition; i.e., we show that when the data distribution \red{localizes} on low-volume subsets of the input space, and these subsets are sufficiently separated from one another, then a robust classifier exists. These results suggest that such underlying geometry in natural image distributions should be exploited for constructing classifiers robust against $\ell_0$ attacks. Indeed, we then propose a simple
classifier, called Box Nearest Neighbors (\textsc{Box-NN}), that utilizes this underlying geometry by having decision regions that are unions of axis-aligned rectangular boxes in the input space. Such a classifier naturally allows for $\ell_0$ robustness certificates that improve upon prior work for certified defenses in a wide regime. 

To summarize, we make the following contributions in this work:
\begin{enumerate}
\item In \cref{sec:necessary} we show that if a data-distribution $p$ defining a multi-class classification problem admits a robust classifier whose error is at most $\delta$ under sparse adversarial perturbations to $\epsilon$ pixels, then 
there is a subset $S$ of volume at most $C e^{-\epsilon^2/n}$ and a class $k$ such that the class conditional $q_k$ places a large mass $q_k(S) \geq 1 - \delta$ on $S$, \ie, 
$q_k$ is $(C, \epsilon^2 / n, \delta)$-\red{localized}.

\item In \cref{sec:sufficient}, we show that a stronger notion of \red{localization}, which ensures that the class conditional distributions are sufficiently separated with respect to the $\ell_0$ distance, is sufficient for the existence of a robust classifier. In fact, this result generalizes to any distance $d$, showing the existence of a robust classifier w.r.t.~perturbations bounded in distance $d$ whenever the data distribution $p$ is strongly \red{localized} with respect to $d$.

\item In \cref{sec:boxclassifier}, we propose a classifier certifiably robust against sparse adversarial attacks, called \textsc{Box-NN}, and derive certificates of $\ell_0$ robustness for it.
We then provide empirical evaluation on the MNIST and the Fashion-MNIST datasets, and demonstrate that \textsc{Box-NN} obtains state-of-the-art results in certified $\ell_0$ robustness.
\end{enumerate}

\section{Existence of an $\ell_0$-Robust Classifier implies \red{Localization}} \label{sec:necessary}
We will take our data domain to be $[0, 1]^n$, to mimic the standard natural image classification tasks\footnote{Albeit with a scaling -- natural images are typically stored with each pixel value in $[0, 255]$.}, \ie, $\cX = \{x \colon \|x\|_\infty \leq 1\}$. We will take our label domain to be $\cY = \{1, 2, \ldots, K\}$, and assume that we have a classification task defined by a data distribution $p$ over $\cX \times \cY$. The conditional distribution $p_{X | Y = k}$ for each $k \in \cY$ will be denoted by $q_k$. 

For any classifier $f \colon \cX \to \cY$, we recall the standard definition of robust risk $R_{d}(f, \epsilon)$ against perturbations bounded in a distance $d$ as 
\begin{equation*}
R_{d}(f, \epsilon) = \bbP_{(x, y) \sim p} \left( \exists \bar x \in B_{d}(x, \epsilon) \text{ such that } f(\bar x) \neq y \right). 
\end{equation*}
Similarly, we define a classifier $f$ to be $(\epsilon, \delta)$-robust with respect to a distance $d$ if the robust risk against perturbations at a distance bounded by $\epsilon$ is at most $\delta$, \ie, $R_d(f, \epsilon) \leq \delta$.

For the rest of this section, we will assume that $p$ defines a task for which one can obtain a classifier $f$ such that $R_{\ell_0}(f, \epsilon) \leq \delta$, where  $\epsilon$ is a non-negative integer denoting the maximum number of pixels that an adversary can perturb. Given such an $f$, we will show that $p$ should satisfy the special property of \red{localization}. In other words, we will obtain a necessary condition for $\ell_0$ robustness. This special property of $(C, \epsilon, \delta)$-\red{localization} is similar to \citet[Definition 2.2]{pal2023concentration}, with a slight modification:
\begin{definition}[Localized Distribution, modification of \citet{pal2023concentration}]
\label{l0concdefn}
A probability distribution $q$ over a domain $\cX \subseteq \R^n$ is said to be $(C, \epsilon, \delta)$-\red{localized} if there exists a subset $S \subseteq \cX$ such that $q(S) \geq 1 - \delta$ but ${\rm Vol}(S) \leq C \exp(-\epsilon)$. 
Here, ${\rm Vol}$ denotes the standard Lebesgue measure on $\R^n$, and $q(S)$ denotes the measure of $S$ under $q$.
\end{definition}
\cref{l0concdefn} is similar to \citet[Definition 2.2]{pal2023concentration} but it removes the explicit dimension of the problem, \ie, $n$, from the volume constraint. This allows one to state the results in \citet{pal2023concentration}, as well as ours, under the same definition. \red{Additionally, we rename the property from \emph{concentration} in \citet{pal2023concentration} to \emph{localization}, in order to distinguish ourselves from the well known notion of \emph{concentration of measure}. These two notions are related and, before proceeding, we compare them in more detail.

The notion of \emph{measure concentration} from high dimensional probability theory  roughly states that for a given large dimension $n$, ``a well behaved function $h$ of the random variables $Z_1, Z_2, \ldots, Z_n$ takes values close to its mean $\E h(Z_1, \ldots, Z_n)$ with high probability'' \citep{talagrand1996new}. A popular quantification of this notion 
states that for a metric space $(\cX, d)$ and a probability distribution $q$ over $\cX$, the concentration function $\alpha$ defined as
\begin{equation}
\alpha_{q, d}(t) = \sup_{S \subseteq \cX, \ q(S) \geq 1/2} 1 - q(S^{+t}),
\end{equation}
decreases ``very fast'' with $t$, where recall that $S^{+t} = \{x \in \cX \colon d(x, S) \leq t\}$. We typically say that $q$ has the property of measure concentration if there is an exponential decay as $\alpha_{q, d}(t) \sim \exp(-\gamma t)$ for all $t \geq 0$, and some universal constant $\gamma$.  
 
In contrast, the definition of $(C, \epsilon, \delta)$-localization requires the existence of $S \subseteq \cX$ such that $q(S) \geq 1 - \delta$ and $\vol{S} \leq C \exp(-\epsilon)$. Concentration and localization are similar in the underlying message: most of the mass in $q$ is concentrated near a small region in space. However, the mathematical formalization is different, as localization does not require a fast enough rate of decay of the measure, and hence does not require an underlying metric on the space $\cX$. In order to show that a given distribution $q$ localizes, it is sufficient to provide a single instance of a set $S \subseteq \cX$ that satisfies the localization parameters. For our data domain $\cX = [0, 1]^n$, we will consider a family of probability distributions given by $q_a = {\rm Unif}([0,a]^n)$ for $a \in (0, 1]$, and comment on their localization and measure concentration parameters, to shed light into their similarities and differences.

For any $S \subseteq [0, a]^n \subseteq \cX$, we can simplify $1 - \delta \leq q_a(S) = \frac{1}{a^n} \vol{S} \leq \frac{1}{a^n} \exp(-\epsilon)$ to obtain that 
\begin{align*}
q_a \text{ is } \left(1, \log \left( \frac{1}{1 - \delta} \right) + n \log \left(\frac{1}{a}\right), \delta \right)-\text{localized for any }\delta \in [0, 1].
\end{align*} 
From the above we can see that keeping $\delta, a < 1$ fixed, $q_a$ becomes ``more localized'' as the dimension $n$ increases. Similarly, keeping $\delta, n$ fixed, $q_a$ becomes more localized as $a$ gets closer to $0$. In this sense, the localization parameters depend on the scale of the support of the underlying distribution. 

In contrast, as measure concentration depends on an underlying metric, the concentration parameters are independent of the scale of the support when the metric is invariant to scaling. As an example, for $\cX$ equipped with the hamming metric, $d_0(x, x') = \|x - x'\|_0$, the concentration function for the distribution $q_a$ can be shown to be
\begin{align}
\alpha_{q_a, d_0}(t) \leq 2 \exp \left( -\frac{t^2}{n} \right).
\end{align}

%

Armed with the above definition, we will now derive a necessary condition for $\ell_0$-robustness in terms of localization, by using a measure-concentration result w.r.t. the $\ell_0$ distance due to \citet{talagrand1995concentration}.}
\begin{restatable}[]{theorem}{lzeroconc}
If there exists an $(\epsilon, \delta)$-robust classifier $f$ with respect to the $\ell_0$ distance for a data distribution $p$, then at least one of the class conditionals $q_1, q_2, \ldots, q_K$ must be $(C, \epsilon^2 / n, \delta)$--\red{localized} according to \cref{l0concdefn}. Further, if the classes are balanced, then all the class conditionals are $(C_{\rm max}, \epsilon^2 / n, K \delta)$-\red{localized}. Here, $C$ and $C_{\rm max}$ are constants dependent on $f$.
\label{th:lzeroconc}
\end{restatable}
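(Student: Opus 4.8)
The plan is to translate the robustness hypothesis into a statement about the decision cells of $f$ and then bound their volumes using a product-space concentration inequality for the Hamming metric. Fix notation: write $\pi_k=\bbP_{(x,y)\sim p}(y=k)$ and $\cX_k=f^{-1}(k)$, so $\{\cX_k\}_{k=1}^K$ partitions $\cX=[0,1]^n$; for each $k$ define the ``robustly-$k$'' region $G_k=\{x\in\cX:B_{\ell_0}(x,\epsilon)\subseteq\cX_k\}$. A labelled point $(x,y)$ is classified correctly under every admissible $\ell_0$-attack precisely when $x\in G_y$, hence $R_{\ell_0}(f,\epsilon)=1-\sum_k\pi_k q_k(G_k)$, and $R_{\ell_0}(f,\epsilon)\le\delta$ becomes $\sum_k\pi_k q_k(G_k)\ge 1-\delta$. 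Since $(\pi_k)_k$ is a probability vector, some class $k^\star$ has $q_{k^\star}(G_{k^\star})\ge 1-\delta$, so $G_{k^\star}$ is the natural candidate for the witness set $S$ of \cref{l0concdefn} and everything reduces to bounding $\vol{G_{k^\star}}$ (and, in the balanced case, every $\vol{G_k}$).

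Next I would use the erosion identity $G_k=\cX\setminus(\cX\setminus\cX_k)^{+\epsilon}$, valid because $B_{\ell_0}(x,\epsilon)\subseteq\cX_k$ iff $B_{\ell_0}(x,\epsilon)$ is disjoint from $\cX\setminus\cX_k$ iff $d_0(x,\cX\setminus\cX_k)>\epsilon$. The cube $\cX$ equipped with $d_0$ and the uniform (Lebesgue) measure $\mu$ is a product probability space, so a product-space concentration inequality of \citet{talagrand1995concentration} gives $\alpha_{\mu,d_0}(t)\le 2\exp(-t^2/n)$ --- the $a=1$ instance of the estimate already recorded in the preamble. Applying this with $t=\epsilon$ to $W_k:=\cX\setminus\cX_k$: if $\vol{\cX_k}\le\tfrac12$ then $\mu(W_k)\ge\tfrac12$, so $\mu(W_k^{+\epsilon})\ge 1-2\exp(-\epsilon^2/n)$, hence $\vol{G_k}=1-\mu(W_k^{+\epsilon})\le 2\exp(-\epsilon^2/n)$. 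Since $\sum_k\vol{\cX_k}=1$, at most one cell $\cX_{k_0}$ can have volume exceeding $\tfrac12$, so $\vol{G_k}\le 2\exp(-\epsilon^2/n)$ for every $k\neq k_0$.

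To conclude: if the favoured class $k^\star$ is not the (at most one) index $k_0$ with a dominant cell, then $S=G_{k^\star}$ exhibits $q_{k^\star}$ as $(2,\epsilon^2/n,\delta)$-\red{localized}, and if $k^\star=k_0$ the same set works with the $f$-dependent constant $C=\max\{2,\vol{G_{k^\star}}\exp(\epsilon^2/n)\}$; this proves the first assertion. For balanced classes, $\pi_k\equiv 1/K$ turns $\sum_k\pi_k q_k(G_k)\ge 1-\delta$ into $\sum_k(1-q_k(G_k))\le K\delta$, whence $q_j(G_j)\ge 1-K\delta$ for \emph{every} $j$ (each summand is nonnegative); combined with $\vol{G_j}\le 2\exp(-\epsilon^2/n)$ for all $j\neq k_0$, each such $q_j$ is $(2,\epsilon^2/n,K\delta)$-\red{localized}, and setting $C_{\max}=\max\{2,\max_k\vol{G_k}\exp(\epsilon^2/n)\}$ handles $k_0$ as well, giving the second assertion.

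The step doing the real work is the Hamming-metric concentration of the uniform measure on the cube (the $a=1$ case of the preamble's estimate, which is itself where Talagrand's product-space inequality enters; a self-contained route is McDiarmid's bounded-differences inequality applied to the $1$-Lipschitz map $x\mapsto d_0(x,W_k)$ together with the standard mean-versus-median comparison). Given that, the remaining subtleties to get right are: (i) that the correct witness set for \cref{l0concdefn} is the \emph{eroded} cell $G_k$ rather than $\cX_k$ itself, since a decision cell may fill most of the cube while its $\epsilon$-erosion is exponentially small; (ii) that at most one cell can be ``large,'' so at most one class can escape the volume bound --- which is exactly why one passes to the favoured class $k^\star$ for the first claim and why the averaging $\sum_k(1-q_k(G_k))\le K\delta$ is precisely what the balanced case needs for the second; and (iii) absorbing that single possible exceptional cell into the $f$-dependent constant. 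The erosion identity and the ``at most one large cell'' pigeonhole are otherwise elementary bookkeeping.
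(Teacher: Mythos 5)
Your overall strategy is the paper's: the witness set you choose, $G_{k^\star}=\{x\in\cX: B_{\ell_0}(x,\epsilon)\subseteq \cX_{k^\star}\}=\cX\setminus(\cX\setminus\cX_{k^\star})^{+\epsilon}$, is exactly the complement of the paper's unsafe set $U_{k^\star}$, the averaging step $\sum_k\pi_k q_k(G_k)\ge 1-\delta\Rightarrow q_{k^\star}(G_{k^\star})\ge 1-\delta$ is the same reduction, and the volume bound rests on the same Talagrand-type concentration for the Hamming metric on the cube; the balanced-case bookkeeping $q_k(G_k)\ge 1-K\delta$ for every $k$ is also correct.

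The gap is in your treatment of the possible ``large'' decision cell. Because you invoke the symmetric concentration-function bound $\alpha_{\mu,d_0}(t)\le 2e^{-t^2/n}$, you need $\mu(\cX\setminus\cX_k)\ge 1/2$, and when the favoured class $k^\star$ happens to be the one cell of volume greater than $1/2$ you patch this by taking $C=\max\{2,\vol{G_{k^\star}}e^{\epsilon^2/n}\}$. That $C$ is not ``a constant dependent on $f$'': it depends on $\epsilon$ (both through $G_{k^\star}$, which is the $\epsilon$-erosion, and through the factor $e^{\epsilon^2/n}$), and with an $\epsilon$-dependent prefactor the localization claim is empty --- every distribution is $(e^{\epsilon^2/n},\epsilon^2/n,\delta)$-localized by taking $S=\cX$. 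So in the entirely possible case $\vol{\cX_{k^\star}}>1/2$ (and likewise for the exceptional class in your balanced-case argument) your proof does not establish the theorem in a nonvacuous form. The fix is to use Talagrand's inequality in its asymmetric form, which is precisely the lemma the paper quotes (Proposition 2.1.1 of \citet{talagrand1995concentration}): $\bbP_{x\sim\mu^n}({\rm dist}(B,x)\ge t)\le \exp(-t^2/n)/\mu^n(B)$ for an \emph{arbitrary} $B$, applied with $B=\cX\setminus\cX_k$ and $t=\epsilon$. This yields $\vol{G_k}\le e^{-\epsilon^2/n}/(1-\vol{\cX_k})$ for every class with $\vol{\cX_k}<1$, removes the ``at most one large cell'' pigeonhole and the exceptional-case patch entirely, and produces the $f$-dependent constants $C=1/(1-\vol{\cX_{k^\star}})$ and $C_{\max}=\max_k 1/(1-\vol{\cX_k})$ asserted in the theorem.
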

\begin{proof}
We are given a classifier $f$ which is $(\epsilon, \delta)$-robust w.r.t.~perturbations bounded in the $\ell_0$ distance. In other words, we have $R_{\ell_0}(f, s) \leq \delta$. Expanding this we get
\begin{equation*}
\sum_k \bbP  \left( \exists \bar x \in B_{\ell_0}(x, \epsilon) \text{ such that } f(\bar x) \neq k \right) \bbP (y = k) \leq \delta.
\end{equation*}
In other words, there exists a class $k'$ satisfying $q_{k'} \big( \{ x \in \cX \colon \exists \bar x \in B_{\ell_0}(x, \epsilon)$ such that $f(\bar x) \neq k' \} \big) \leq \delta$. Defining
the unsafe set for the class $k'$ as $U_{k'} = \{ x \in \cX \colon \exists \bar x \in B_{\ell_0}(x, \epsilon) \text{ such that } f(\bar x) \neq k' \}$, we have shown 
\begin{equation}
q_{k'} (U_{k'}) \leq \delta. \label{unsafeset}
\end{equation}
Define $A_{k'} \subseteq \cX$ to be the region where $f$ predicts $k'$, \ie, $A_{k'} = \{x \in \cX \colon f(x) = k'\}$. Further, for any set $Z$ define $Z^{+\epsilon}$ to be all the points in the domain $\cX$ which are at most $s$ away from $Z$ in $\ell_0$ distance, \ie, $Z^{+\epsilon} = \{x \in \cX \colon \exists \bar x \in Z \text{ such that } \|x - \bar x\|_0 \leq \epsilon\}$ 
Then, we have 
\begin{align*}
U_{k'} &= \{ x \in \cX \colon \exists \bar x \text{ such that } \|x - \bar x\|_0 \leq \epsilon, f(\bar x) \neq k' \} \\
&= \{ x \in \cX \colon \exists \bar x \in (\cX \setminus A_{k'}) \text{ such that } \|x - \bar x\|_0 \leq \epsilon\} \\
&= (\cX \setminus A_{k'})^{+\epsilon}. 
\end{align*}
Now, we will use measure concentration on the unit cube from \citet[Proposition 2.1.1]{talagrand1995concentration}:
\begin{lemma}[Proposition 2.1.1 in \cite{talagrand1995concentration}] \label{talagrand}
For $B \subseteq [0, 1]^n$, ${\rm dist}(x, B) = \min_{z \in B} \|x - z\|_0$, any measure $\mu$ on $[0, 1]$, we have 
\begin{equation*}
    \bbP_{x \sim \mu^n}({\rm dist}(B, x) \geq t) \leq \frac{1}{\bbP_{x \sim \mu^n}(x \in B)} \exp(-t^2 / n).
\end{equation*}
\end{lemma}
Note that since the domain $[0, 1]^n$ has $n$-dimensional volume $1$, \ie, ${\rm Vol}([0, 1]^n) = 1$, the uniform measure of any set $\mu^n(B) = {\rm Vol}(B)$, for $B \subseteq [0, 1]^n$. Substituting $B = \cX \setminus A_{k'}$, $t = \epsilon$, $\mu = {\rm Unif}([0, 1])$, in \cref{talagrand}, we 
obtain 
\begin{equation*}
{\rm Vol}(\cX \setminus A_{k'})^{+\epsilon} \geq 1 - \frac{\exp(-\epsilon^2/n)}{{\rm Vol} (\cX \setminus A_{k'})}.
\end{equation*} 
Using ${\rm Vol}(\cX \setminus U_{k'}) = 1 - {\rm Vol}(\cX \setminus A_{k'})^{+\epsilon}$, we obtain 
\begin{equation}
{\rm Vol}(\cX \setminus U_{k'}) \leq \frac{\exp(-\epsilon^2/n)}{{\rm Vol} (\cX \setminus A_{k'})}. \label{unsafevol}
\end{equation}
Finally, combining \eqref{unsafeset}, \eqref{unsafevol}, and taking $S = \cX \setminus U_{k'}$, we have
\begin{align*}
q_{k'}(S) \geq 1 - \delta, \qquad {\rm Vol} (S) \leq C \exp(-\epsilon^2/n), 
\end{align*}
where $C = \frac{1}{1 - {\rm Vol} (A_{k'})}$, showing that $q_{k'}$ is $(C, \epsilon^2/n, \delta)$-\red{localized}. If the classes were balanced, repeating the above argument for each class shows that $q_k$ is $(C, \epsilon^2/n, K\delta)$-\red{localized} for all $k \in \cY$ for $C_{\rm max} = \max_{k'} (1 / (1 - {\rm Vol}(A_{k'})))$. 
\end{proof}
%
\paragraph{Discussion on \cref{th:lzeroconc}}
A few comments are in order for the above result. 
\begin{enumerate}[leftmargin=0.5cm]
\item \cref{th:lzeroconc} demonstrates that whenever a $\ell_0$ robust classifier exists for a data distribution, this distribution must be \red{localized}. This could be instantiated for real data sets like ImageNet to obtain interesting observations about the underlying distribution. For instance, humans are robust to perturbation of a few pixels to any image in ImageNet. Then, \cref{th:lzeroconc} tells us that ImageNet is \red{localized}. Note, however, that the \red{localization} \emph{parameters} (\ie,  $C, \epsilon, \delta$ for the human classifier) are unknown. 

\item The \red{localization} parameters in \cref{th:lzeroconc} are different than the concentration parameters in \citet[Theorem 2.1]{pal2023concentration}. Specifically, \citet[Theorem 2.1]{pal2023concentration} shows that $(C, n \epsilon, \delta)$-concentration is a necessary condition for $\ell_2$-robustness under \cref{l0concdefn}, and we will now show that $(C, \epsilon^2 / n, \delta)$-\red{localization} is a necessary condition for $\ell_0$-robustness. This demonstrates that the existence of a classifier robust to  $\ell_0$ classifier implies a different kind of \red{localization} of the data distribution than robustness to $\ell_2$ perturbations. While \citet{pal2023concentration} assume that their data lies in a unit $\ell_2$ ball with adversarial perturbation strength $\epsilon \in [0, 1]$, we assume that our data lies in a unit $\ell_\infty$ ball and with perturbation strength $\epsilon \in \{0, 1, 2, \ldots, n\}$. As such a direct comparison of the parameters is not immediate as our work deals with objects very different from \citet{pal2023concentration}. 


\item \cref{th:lzeroconc} suggests that for obtaining $\ell_0$ robust classifiers, we should try to find and classify over the sets that the distribution \red{localizes} on. This is a significant departure from the existing literature on $\ell_0$-robust classifiers \cite{levine2020randomized,jia2022almost,hammoudeh2023feature}, and indeed, we will obtain a classifier in \cref{sec:boxclassifier} that respects such geometry.
\end{enumerate}

We have now demonstrated that \red{localization} is a necessary condition for the existence of a classifier robust to perturbations bounded in the $\ell_0$ distance, \ie, perturbations having a small support. Inspired by the investigations in \cite{pal2023concentration}, we will now consider whether this condition is also sufficient. 

\section{$d$-Strong \red{Localization} implies Existence of a $d$-Robust Classifier} \label{sec:sufficient}
\red{Localization} of the data distribution ensures that each class conditional concentrates on a small volume subset of $\cX$. However, as noted in \cite{pal2023concentration}, these subsets might intersect too much, in which case there might not exist a classifier with low standard risk, \ie, $R_{\ell_0}(f, 0)$. 
Hence, one cannot expect \red{localization} to be sufficient for the existence of a classifier with low robust risk, \ie, $R_{\ell_0}(f, \epsilon)$ with $\epsilon > 0$. 
However, if these subsets were \emph{separated} enough, then one can expect to use them to build a robust classifier. Indeed, we will now formalize this intuition to obtain a condition stronger than \red{localization}, which will be shown to be sufficient for the existence of a robust classifier.

\begin{definition}[$d$-Strongly \red{Localized} Distributions, generalizing \cite{pal2023concentration}] \label{def:strongconc}A distribution $p$ is said to be $(\epsilon, \delta, \gamma)$-strongly-\red{localized} with respect to a distance $d$, if each class conditional distribution $q_k$ \red{localizes} over the set $S_k \subseteq \cX$ such that $q_k(S_k) \geq 1 - \delta$, and $q_k\left(\bigcup_{k' \neq k} S_{k'}^{+2\epsilon}\right) \leq \gamma$, where $S^{+\epsilon}$ denotes the $\epsilon$-expansion of the set $S$ in $d$, \ie, $S^{+\epsilon} = \{ x  \colon \exists \bar x \in S \text{ such that } d(x, \bar x) \leq \epsilon\}$.
\end{definition}


With the above definition, we will now obtain a generalization of \citet[Theorem 3.1]{pal2023concentration} to an arbitrary distance $d$: 

\begin{restatable}[]{theorem}{lzerostrongconc}
If $p$ is $(\epsilon, \delta, \gamma)$-strongly \red{localized} with respect to a distance $d$, then there exists a classifier $f$ such that $R_d(f, \epsilon) \leq \delta + \gamma$. \label{th:strongconc}
\end{restatable}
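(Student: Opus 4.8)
The plan is to build the classifier $f$ explicitly from the localization sets $S_1, \ldots, S_K$ and argue that the "bad" contribution to the robust risk is controlled by $\delta + \gamma$. Define $f$ on $\cX$ by assigning to each point $x$ the label $k$ if $x \in S_k^{+\epsilon}$ and this is the only such $k$ (i.e. $x$ lies in the $\epsilon$-expansion of exactly one $S_k$), and assigning an arbitrary fixed label (say $1$) otherwise. Equivalently, one can think of the decision region $A_k$ as $S_k^{+\epsilon}$ minus the $\epsilon$-expansions of the other sets, with the leftover region dumped onto class $1$. The point of expanding by $\epsilon$ is that if $x$ is genuinely drawn from class $k$ and lies in $S_k$, then every $\bar x \in B_d(x,\epsilon)$ satisfies $\bar x \in S_k^{+\epsilon}$, so as long as $x$ does not also fall into some $S_{k'}^{+2\epsilon}$ for $k' \neq k$ — which would be needed for $\bar x$ to lie in $S_{k'}^{+\epsilon}$ by the triangle inequality — the classifier returns $k$ on the whole ball.

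The key steps, in order: (1) Fix the construction of $f$ and $A_k$ as above. (2) For a fixed class $k$, bound $\bbP_{x \sim q_k}(\exists \bar x \in B_d(x,\epsilon) : f(\bar x) \neq k)$. Split the event: either $x \notin S_k$, which has $q_k$-probability at most $\delta$ by the localization hypothesis $q_k(S_k) \geq 1 - \delta$; or $x \in S_k$ but there exists $\bar x \in B_d(x,\epsilon)$ with $f(\bar x) \neq k$. In the latter case, $\bar x \notin A_k$, so either $\bar x \notin S_k^{+\epsilon}$ (impossible, since $d(\bar x, x) \leq \epsilon$ and $x \in S_k$ forces $\bar x \in S_k^{+\epsilon}$) or $\bar x \in S_{k'}^{+\epsilon}$ for some $k' \neq k$. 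The latter, combined with $d(x, \bar x) \leq \epsilon$ and the triangle inequality for $d$, gives $x \in S_{k'}^{+2\epsilon}$, hence $x \in \bigcup_{k' \neq k} S_{k'}^{+2\epsilon}$, an event of $q_k$-probability at most $\gamma$ by strong localization. A union bound over these two cases gives the per-class bound $\delta + \gamma$. (3) Average over $k$ weighted by $\bbP(y = k)$: since the bound $\delta + \gamma$ is uniform in $k$, the total robust risk $R_d(f,\epsilon) = \sum_k \bbP(y=k)\,\bbP_{x\sim q_k}(\cdots) \leq \delta + \gamma$.

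The main subtlety — not really an obstacle, but the place to be careful — is step (2): making sure the triangle inequality is used correctly and that the "$f(\bar x) \neq k$" case is exhaustively reduced to the $\gamma$-event, in particular handling the leftover region where $f$ defaults to class $1$. A point $\bar x$ in the overlap $S_k^{+\epsilon} \cap S_{k'}^{+\epsilon}$ gets the default label, so it also counts as $f(\bar x) \neq k$; but the argument above already covers this, since membership in $S_{k'}^{+\epsilon}$ is exactly what triggers the $x \in S_{k'}^{+2\epsilon}$ conclusion. One should also note $d$ need only satisfy the triangle inequality (a pseudometric suffices), which matches the generality claimed in the theorem. No measurability pathologies arise because all the relevant sets are built from $\epsilon$-expansions of the $S_k$, which we may assume measurable. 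Everything else is a routine union bound.
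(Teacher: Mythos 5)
Your proposal is correct and takes essentially the same route as the paper: you construct the identical classifier (predict $k$ on $S_k^{+\epsilon}$ with the other $\epsilon$-expansions removed, default elsewhere) and obtain the same per-class decomposition, $\delta$ for the mass outside $S_k$ plus $\gamma$ for the mass in $\bigcup_{k' \neq k} S_{k'}^{+2\epsilon}$, before averaging over classes. The only difference is stylistic: you argue pointwise with the triangle inequality, whereas the paper encodes the same two facts as set identities via its expansion/contraction lemma (\cref{lem:expandcontract}).
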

\begin{proof} At a high level, we will construct a classifier $g$ that predicts the label $k$ over an $\epsilon$-expansion of the set $S_k$ on which the class conditional $q_k$ \red{localizes}. We will then ``shave off'' some regions from each $S_k$ to ensure $g$ is well defined. For the rest of the input space $\cX$ we will predict an arbitrary label, as we incur at most $\gamma$ in robust risk. Our construction of the robust classifier $f$ is same as that in \citet{pal2023concentration}, extended to general $d$. However, bounding the robust risk of $f$ needs technical innovations, since we are bounding the robust risk with respect to a general distance $d$, as opposed to the $\ell_2$ norm in \citet{pal2023concentration}.

For each $k \in \{1, 2, \ldots, K\}$, let $S_k$ be the set over which the conditional density $q_k$ is \red{localized}, i.e., $q_k(S_k) \leq 1 - \delta$. Define $S^{+\epsilon}$ to be the $\epsilon$-expansion of the set $S$, as $S^{+\epsilon} = \{x \colon \exists x' \in S, d(x, x') \leq \epsilon\}$. Define $C_k$ to be the $\epsilon$-expanded version of the \red{localized} region $S_k$ but removing the $\epsilon$-expanded version of all other regions $S_{k'}$, as
\begin{equation*}
C_k = \left( S_k^{+\epsilon} \setminus \cup_{k' \neq k} S^{+\epsilon}_{k'} \right) \cap \cX.
\end{equation*}
Similar to the construction in \cite{pal2023concentration}, we will use these regions to define the classifier $f \colon \cX \to \{1, 2, \ldots, K\}$ as
\begin{equation*}
f(x) = \begin{cases}
1, &\text{ if } x \in C_1 \\
2, &\text{ if } x \in C_2 \\
\vdots \\
K, &\text{ if } x \in C_K \\
1, &\text{ otherwise }
\end{cases}.
\end{equation*}
We will now show that $R_d(f, \epsilon) \leq \delta + \gamma$, which can be recalled to be 
\begin{align}
R_d(f, \epsilon) 
&= \sum_k q_k(U_k) p_Y(y = k), \label{eq:rdecomp}
\end{align}
where the $q_k$ mass in \eqref{eq:rdecomp} is over the set of all points $x \in \cX$ that admit an $\epsilon$-adversarial example for the class $k$, defined as
\begin{equation}
U_k = \{x \in \cX \colon \exists \bar x \in B_d(x, \epsilon) \cap \cX \text{ such that } f(\bar x) \neq k\}. \label{unsafekl0}
\end{equation}%
As we saw earlier in the proof of \cref{th:lzeroconc}, $U_k = \left(\cX \setminus C_k\right)^{+\epsilon} \cap \cX$. We will obtain an upper bound on $q_k(U_k)$, which will in turn give us an upper bound on $R_d(f, \epsilon)$. 

Let $A = S_k^{+\epsilon} \cap \cX$ and $B = \cup_{k' \neq k} S^{+\epsilon}_{k'}$. As $C_k = A \setminus B$, we have 
\begin{align*}
\cX \setminus C_k &= \cX \cap (A \cap B^c)^c \\
&= \cX \cap (A^c \cup B) \\
&= (\cX \cap A^c) \cup (\cX \cap B) \\
&= \left(\cX \cap \left( S_k^{+\epsilon} \right)^c \right) \cup \left( \cup_{k' \neq k} (\cX \cap S^{+\epsilon}_{k'}) \right).
\end{align*}
Then, we can expand $(\cX \setminus C_k)^{+\epsilon}$
\begin{equation*}
 \left(\cX \cap \left( S_k^{+\epsilon} \right)^c \right)^{+\epsilon} \cup \left( \cup_{k' \neq k} (\cX \cap S^{+\epsilon}_{k'})^{+\epsilon} \right),
\end{equation*}
from the property $(U \cup V)^{+\epsilon} = U^{+\epsilon} \cup V^{+\epsilon}$. Now, since all the mass of $q_k$ lies in $\cX$, \ie, $q_k(\cX) = 1$, we have $q_k(\cX \cap V) = q_k(V)$ for any set $V$. Applying this, we have
\begin{align*}
q_k(U_k) 
&= q_k (\cX \setminus C_k)^{+\epsilon} \\
&\leq q_k \left(\cX \cap \left( S_k^{+\epsilon} \right)^c \right)^{+\epsilon} + q_k\left( \cup_{k' \neq k} (\cX \cap S^{+\epsilon}_{k'})^{+\epsilon} \right) \\
&\leq q_k \left(\left( S_k^{+\epsilon} \right)^c \right)^{+\epsilon} + q_k\left( \cup_{k' \neq k} (S^{+\epsilon}_{k'})^{+\epsilon} \right). 
\end{align*}
Now applying \cref{lem:expandcontract} we have $\left( \left( S_k^{+\epsilon} \right)^c \right)^{+\epsilon} = \left(\left( S_k^{+\epsilon} \right)^{-\epsilon} \right)^c$. Again from \cref{lem:expandcontract} we know that $(V^{+\epsilon})^{-\epsilon} \supseteq V$ for any set $V$. Hence, we have $\left(\left( S_k^{+\epsilon} \right)^{-\epsilon} \right)^c \subseteq S_k^c$. Continuing, 
\begin{align*}
q_k(U_k) 
&\leq q_k(S_k^c) + q_k\left( \cup_{k' \neq k} S^{+2\epsilon}_{k'} \right) \\
&\leq \delta + \gamma,
\end{align*}
Finally, as $\sum_k p_Y(y = k) = 1$, from \eqref{unsafekl0} we have $R_d(f, \epsilon) \leq \delta + \gamma$.
\end{proof}

We note that \citep[Theorem 3.2]{pal2023concentration} follows as a direct corollary of our result \cref{th:strongconc} by taking $d$ to be the $\ell_2$ distance. 

\paragraph{Implications for Existing Impossibility Results} In our setting, \cite{shafahi2018inevitable} prove that for any classifier $f \colon \cX \to \{1, 2, \ldots, K\}$ for any class $k$ with $P(Y = k) \leq 1/2$, any point $x \sim q_k$ is either mis-classified, or admits an  $\epsilon$-adversarial example with probability at least 
\begin{equation}
    1 - \beta_{q_k} \exp \left( - \epsilon^2 / n \right), \label{goldsteinbound}
\end{equation}
where $\beta_{q_k} = 2 \sup_x q_k(x)$ depends on the class conditional $q_k$. When $q_k$ is \red{localized}, $\beta_{q_k}$ can grow faster than $\exp \left( - \epsilon^2 / n \right)$, making the lower bound vacuous. This implies that \emph{for \red{localized} data-distributions there is no impossibility, and there is a wide class of high-dimensional classification problems for which robust classifiers exist}. We now provide a concrete example.
\begin{example}
\label{l0example}
Let us consider a problem with $2$ classes defined by the distribution $p$ such that $P(Y=0) = P(Y=1) = 1/2$, the class conditional $q_1 = P(X | Y = 1) = {\rm Unif}(B_{\ell_\infty}(\mathbf{1}, \epsilon))$, and similarly $q_2 = P(X | Y = 2) = {\rm Unif}(B_{\ell_\infty}(-\mathbf{1}, \epsilon))$. For this distribution, $\beta_{q_1} = \beta_{q_2} = \exp(n)$, and the lower bound \eqref{goldsteinbound} becomes vacuous for $\epsilon \leq \sqrt{n}$ as 
\begin{equation*}
    1 - \beta_{q_k} \exp \left(-\epsilon^2 / n \right) = 1 - 2 \exp(-\epsilon^2/n + n) \leq 0.
\end{equation*}
\end{example}
Even though \cref{l0example} is quite simple, the construction of small $\ell_\infty$ balls in the input space containing most of the mass of the distribution is quite general, and depicts a wide class of data-distributions where existing impossibility results are vacuous. We will now demonstrate that these general theoretical ideas lead to practical $\ell_0$ robust classifiers.

\section{$\ell_0$-Adversarially Robust Classification via the \textsc{Box-NN} classifier} \label{sec:boxclassifier}
In this section, our aim will be to derive a $\ell_0$-robust classifier by utilizing the geometry exposed by \cref{th:strongconc}. To this end, we will first investigate how a robust classifier looks like for a simple 2-class problem in 3-dimensions. This will motivate a general form of a classifier whose decision regions are axis-aligned cuboids, or boxes. Finally, we will generalize this classifier to obtain a $\ell_0$-robust classifier and derive  corresponding $\ell_0$ certificates. 

\subsection{Development and Robustness Certification}
Consider $n = 3$, and say there are two classes, \textsc{cat} and \textsc{dog}, defining conditional distributions $q_1$ and $q_2$, strongly \red{localized} over $S_1$ and $S_2$ respectively, such that $q_1(S_2^{+1}) = 0$ and $q_2(S_1^{+1}) = 0$. In such a situation, \cref{th:strongconc} (invoked with $\epsilon = 1$) constructs a robust classifier $f_A$ as the following:
\begin{equation*}
f_A(x) = \begin{cases}
\text{dog}, &\text{ if } x \in S_1^{+1} \\
\text{cat}, &\text{ if } x \in S_2^{+1} \\
\text{cat}, &\text{otherwise}.
\end{cases}.
\end{equation*}

However, in practice, $S_1, S_2$ might be very complex, and hence $f_A$ might be computationally hard to evaluate. For instance, \cref{fig:boxnn1} shows an illustration where these sets (shaded green and orange) have complicated shapes.
\begin{figure}[h!]
\centering
\includegraphics[width=0.3\textwidth]{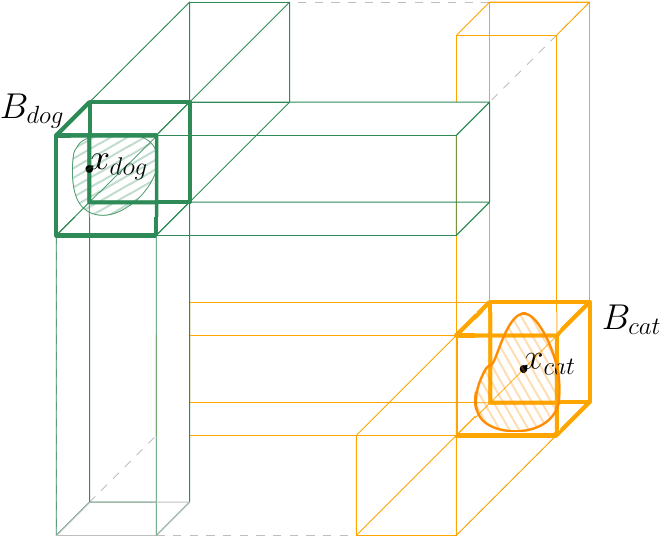}
\caption{$S_1$ is the green shaded region around $x_{dog}$, where the class dog is \red{localized}, and $S_2$ is the orange shaded region around $x_{cat}$, where the class cat is \red{localized}.}
\label{fig:boxnn1}
\end{figure}

From \cref{fig:boxnn1}, we see that the classifier $f_A$ is robust to $1$-pixel perturbations whenever $x \in S_1$ or $x \in S_2$, as \cref{th:strongconc} predicts. More importantly, we see that a perturbation of a single pixel of any $x_{cat} \in S_2$ lies within the union of the orange cuboids. In other words, $\{x' \in [0, 1]^3 \colon \|x - x\|_0 \leq 1, x \in S_1\} = S_1^{+1} \subseteq \textsc{Orange}$, and similarly for the dog class. Furthermore, we see that the intersection of these orange cuboids is given by the cube $B_{cat}$. We can see that for any $x \in B_{cat}$, no single-pixel perturbation $v$ can take $x + v$ outside the orange region $\textsc{Orange}$, and similarly for the dog class. However, $B_{cat}, B_{dog}$ are very efficiently described, they are simply axis-aligned polyhedra enclosing $S_2$ and $S_1$ respectively. This motivates our modified classifier $f_B$, 
\begin{equation*}
f_B(x) = \begin{cases}
\text{dog}, &\text{ if } x \in B_{dog}^{+1} \\
\text{cat}, &\text{ if } x \in B_{cat}^{+1} \\
\text{cat}, &\text{otherwise}.
\end{cases}.
\end{equation*}

While $f_B$ is efficient to describe, it ignores a large portion of the input region outside the green and the orange cuboids, \ie, $\cX \setminus B_{dog}^{+1} \cup B_{cat}^{+1}$, by making the constant prediction \text{cat} in this region. We can further extend $f_B$ to attempt to correctly classify those regions as well, by computing $\ell_0$ distances to our boxes $B_{cat}, B_{dog}$, as
\begin{equation*}
f_C(x) = \argmin_{y \in \{\text{cat}, \text{dog}\}} {\rm dist}(x, B_y),
\end{equation*}
where 
\begin{equation}
{\rm dist}(x, S) = \min_{v} \|v\|_0 \text{ sub. to } x + v \in S \label{eq:dist}
\end{equation}
gives the minimum number of pixel changes needed to get from $x$ to $S$. While solving \eqref{eq:dist} is computationally hard for general $S$, the following lemma shows that for our axis-aligned boxes $B$, \eqref{eq:dist} can be computed efficiently, in closed form. The proofs of all our results can be found in \cref{app:proofs}. 

\begin{restatable}[$\ell_0$ distance to axis-aligned boxes]{lemma}{lzerodist}
\label{lem:l0dist} For an axis aligned box $B(a, b)$ specified as $B(a, b) = \{x \colon a \leq x \leq b\}$, where $a, b, x \in \R^n$, and all inequalities are element-wise, we have 
\begin{equation*}
{\rm dist}(x, B(a, b)) = \sum_{i = 1}^n \1 \left(x_i \not \in [a_i, b_i]\right), 
\end{equation*}
which can be computed in $O(n)$ operations. 
\end{restatable}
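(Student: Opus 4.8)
The plan is to exploit the fact that both the objective and the constraint in \eqref{eq:dist} separate coordinatewise. Since $\|v\|_0 = \sum_{i=1}^n \1(v_i \neq 0)$ and the membership constraint $x + v \in B(a,b)$ is equivalent to the $n$ independent scalar constraints $a_i \leq x_i + v_i \leq b_i$, the optimization \eqref{eq:dist} decomposes into $n$ independent one-dimensional problems, one per coordinate, and the optimal value is the sum of the per-coordinate optima. Throughout I assume $a \leq b$ elementwise, so that $B(a,b)$ is nonempty and ${\rm dist}(x, B(a,b))$ is finite; otherwise the problem in \eqref{eq:dist} is infeasible.

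For the lower bound, I would observe that any feasible $v$ must satisfy $v_i \neq 0$ whenever $x_i \notin [a_i, b_i]$: if $v_i = 0$ then $x_i + v_i = x_i \notin [a_i, b_i]$, contradicting feasibility of $v$. Hence $\|v\|_0 \geq \sum_{i=1}^n \1(x_i \notin [a_i, b_i])$ for every feasible $v$, and therefore ${\rm dist}(x, B(a,b)) \geq \sum_{i=1}^n \1(x_i \notin [a_i, b_i])$.

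For the matching upper bound, I would exhibit an explicit feasible perturbation $v^\star$: set $v^\star_i = 0$ if $x_i \in [a_i, b_i]$, set $v^\star_i = a_i - x_i$ if $x_i < a_i$, and set $v^\star_i = b_i - x_i$ if $x_i > b_i$. Then $x + v^\star \in B(a,b)$ by construction, while $v^\star_i \neq 0$ precisely on the coordinates with $x_i \notin [a_i, b_i]$, so $\|v^\star\|_0 = \sum_{i=1}^n \1(x_i \notin [a_i, b_i])$. Combining this with the lower bound gives the claimed closed form, and the $O(n)$ running time is immediate since the formula only requires testing the $n$ interval memberships $x_i \in [a_i, b_i]$.

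There is no serious obstacle here; the only points requiring a moment's care are the separability observation — that the $\ell_0$ cost and the box constraint both decouple across coordinates, so there is no cross-coordinate interaction that could ever make it advantageous to leave a violated coordinate unperturbed — and the implicit nonemptiness hypothesis $a \leq b$ needed for the upper-bound construction to be feasible.
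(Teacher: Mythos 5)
Your proof is correct and follows essentially the same route as the paper's: a lower bound obtained by noting that every violated coordinate must be modified, matched by an explicit feasible point that changes only those coordinates (the paper sets the violated coordinates to $a_i$, you project to the nearer endpoint, an immaterial difference). The added remarks on separability and the nonemptiness hypothesis $a \leq b$ are fine but not needed beyond what the two bounds already establish.
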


For real data distributions, however, having a single box per class would be overly simplistic and not provide good accuracy. Thus, we generalize $f_C$ to our \textsc{Box-NN} classifier operating on boxes $\cB = \{B_1, B_2, \ldots, B_M\}$, such that we have an label $y_m \in \{1, 2, \ldots, K\}$ associated with each $B_m$. Our \textsc{Box-NN} classifier is then defined as
\begin{equation*}
\textsc{Box-NN}(x, \cB) = y_{m^\star}, \text{ where } m^\star = \argmin_{m} {\rm dist}(x, B_m).
\end{equation*}

Note that, so far, we have not described how these boxes $\cB$ are learned from data. This will be the subject of \cref{sec:empirics} and onward. %
We can now obtain a 
$\ell_0$ robustness certificate for $\textsc{Box-NN}$ via the following Theorem.

\begin{restatable}[Robustness Certificate for \textsc{Box-NN}]{theorem}{thcert}
\label{th:cert} Given a set of boxes $\cB$ and their associated labels $\{y_m\}_{m=1}^M$, define 
\begin{equation*}
m^\star = \argmin_{m} {\rm dist}(x, B_m), \quad d_1 = {\rm dist}(x, B_{m^\star}),
\end{equation*}
and 
\begin{equation*}
d_2 = \min_{m \colon y_m \neq y_{m^*}} {\rm dist}(x, B_m).
\end{equation*}
Then, with ${\rm margin}(x) \overset{\rm def}{=} d_2 - d_1$, we have $
\textsc{Box-NN}(x, \cB) = \textsc{Box-NN}(x', \cB)$ whenever $\|x' - x\|_0 < {\rm margin}(x) / 2$.
\end{restatable}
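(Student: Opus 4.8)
The plan is to reduce the whole statement to one elementary fact: the map $x \mapsto {\rm dist}(x,S)$ is $1$-Lipschitz with respect to the $\ell_0$ metric, for every fixed set $S$. So the first step I would carry out is to prove the lemma that for any $S \subseteq \R^n$ and any $x, x'$,
\[
|{\rm dist}(x, S) - {\rm dist}(x', S)| \le \|x - x'\|_0 .
\]
This is a one-line substitution argument: if $v$ attains the minimum in \eqref{eq:dist} for $x$, so that $x + v \in S$ and $\|v\|_0 = {\rm dist}(x,S)$, then $v' := v + (x - x')$ satisfies $x' + v' = x + v \in S$, hence ${\rm dist}(x',S) \le \|v'\|_0 \le \|v\|_0 + \|x - x'\|_0$; exchanging the roles of $x$ and $x'$ gives the reverse inequality. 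Note this uses only the definition \eqref{eq:dist} and subadditivity of $\|\cdot\|_0$, so it applies verbatim to each box $B_m$ and does not interact with any domain constraint.

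Next I would set $r := \|x' - x\|_0$ and assume $r < {\rm margin}(x)/2 = (d_2 - d_1)/2$. Applying the Lipschitz bound to the nearest box $B_{m^\star}$ yields the upper estimate ${\rm dist}(x', B_{m^\star}) \le {\rm dist}(x, B_{m^\star}) + r = d_1 + r$, and applying it to each box $B_m$ with $y_m \neq y_{m^\star}$ yields the lower estimate ${\rm dist}(x', B_m) \ge {\rm dist}(x, B_m) - r \ge d_2 - r$. The hypothesis $r < (d_2 - d_1)/2$ is precisely the inequality $d_1 + r < d_2 - r$, so every box whose label differs from $y_{m^\star}$ is strictly farther from $x'$ than $B_{m^\star}$ is.

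Finally I would conclude as follows. Let $m' \in \argmin_m {\rm dist}(x', B_m)$ be any index selected by $\textsc{Box-NN}$ at $x'$. Then ${\rm dist}(x', B_{m'}) \le {\rm dist}(x', B_{m^\star}) \le d_1 + r < d_2 - r \le {\rm dist}(x', B_m)$ for every $m$ with $y_m \neq y_{m^\star}$, so $m'$ cannot be one of those boxes; hence $y_{m'} = y_{m^\star}$, and $\textsc{Box-NN}(x', \cB) = y_{m'} = y_{m^\star} = \textsc{Box-NN}(x, \cB)$, as claimed. I do not expect a real obstacle; the only point needing care is that $m'$ need not equal $m^\star$ (ties are allowed, and a different box with the same label may become closest after perturbation), so the argument is deliberately phrased to control the \emph{label} of the selected box rather than its index. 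The degenerate cases are harmless: if all boxes share one label then $d_2 = +\infty$ and both hypothesis and conclusion are trivial, and if ${\rm margin}(x) = 0$ the hypothesis $\|x' - x\|_0 < 0$ is vacuous.
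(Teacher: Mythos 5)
Your proposal is correct and follows essentially the same route as the paper: the paper's proof also rests on the triangle inequality $\mathrm{dist}(x',B) + \|x-x'\|_0 \geq \mathrm{dist}(x,B)$ (your $\ell_0$-Lipschitz lemma), applied to the nearest box on one side and to all differently-labeled boxes on the other, to conclude that the box selected at $x'$ must carry the label $y_{m^\star}$. Your explicit handling of ties and of the fact that the selected index may change (only the label matters) is a point the paper leaves implicit, but the argument is the same.
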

%
%
\paragraph{Key Intuition} Our robust classifier \textsc{Box-NN} is essentially a generalization of the nearest-neighbor classifier to a nearest-box classifier, specifically suited to $\ell_0$ metrics. This simple form turns out to be the right choice, in the sense of the theoretical motivation of our previous section, for defending against sparse perturbations. As we will shortly see, \textsc{Box-NN} also empirically produces better certificates than prior work in several regimes.

Having developed the geometric intuition and the theoretical robustness guarantees for \textsc{Box-NN}, we will now describe how we learn our classifier from data, and the associated challenges.

\subsection{Learning \textsc{Box-NN} from Data} \label{sec:empirics}
In this section, we are concerned with learning boxes $\{B_m\}$ and their associated labels $\{y_m\}$, such that \textsc{Box-NN} obtains a high accuracy under sparse adversarial perturbations. For the rest of this section, we will refer to the classifier $\textsc{Box-NN}$ as $f_\theta$, with the learnable  parameters $\theta = \{a_k, b_k, y_k\}_{k=1}^M$ following the notation in \cref{lem:l0dist}. 

The quantity we are interested in maximizing is the robust accuracy, defined as $1 - R_{\ell_0}(f_\theta, \epsilon)$ following our notation in \cref{sec:necessary}. As we do not have access to the data distribution, we will instead be concerned with maximizing the empirical robust accuracy ${\rm RobustAcc}(f_\theta, \epsilon) $ defined over a set of samples $\{x_i, y_i\}_{i=1}^N$ given by
\begin{equation}
\frac{1}{N} \sum_{i = 1}^N \1 \left[ \forall x' \colon \|x' - x_i\|_0 \leq \epsilon, \  f_\theta(x') = y_i \right]. \label{eq:robacc}
\end{equation}

The objective in \eqref{eq:robacc} is a complicated object, and direct maximization w.r.t. $\theta$ is challenging. In the following, we will first lower bound \eqref{eq:robacc} and then use several optimization tricks to efficiently maximize this lower bound.

{\color{teal}
}

Recall from \cref{th:cert} that $f_\theta(x) = f_\theta(x')$ for all $x'$ satisfying $\|x - x'\|_0 \leq C_\theta(x) \overset{\rm def}{=} {\rm margin}(x)/2$, where $C_\theta$ is a pointwise certificate (at $x$)  of robustness for $f_\theta$. With this, we have the certified accuracy lower bound ${\rm RobustAcc}(f_\theta, \epsilon) \geq {\rm CertAcc}(f_\theta, \epsilon)$ defined as
\begin{equation}
{\rm CertAcc}(f_\theta, \epsilon) \overset{\rm def}{=} \frac{1}{N} \sum_{i = 1}^N \1[f_\theta(x_i) = y_i] \cdot \1[C_\theta(x_i) \geq \epsilon]. \label{certacc}
\end{equation}

We will take a gradient based optimization approach to maximize \eqref{certacc} over $\theta$. However, since the gradients of $\1[\cdot]$ are zero almost everywhere (and discontinuous otherwise), we will progressively relax the indicators in \eqref{certacc}. To this end, we maximize the integral of ${\rm CertAcc}(f_\theta, \epsilon)$ over all $\epsilon \geq 0$ instead of treating it point-wise\footnote{\ie, $\int_{\epsilon \geq 0} \1[\epsilon \leq \alpha] d \epsilon = \alpha$}, leading to the objective
\begin{equation}
    L_1(\theta) = \frac{1}{N} \sum_{i = 1}^N \1[f_\theta(x_i) = y_i] \cdot C_\theta(x_i). \label{eq:sparse}
\end{equation}

On the other hand, recall from \cref{th:cert} that the ${\rm margin}$ involves the $\min$ function,
\begin{equation*}
{\rm margin}(x) = \min_m {\rm dist}(x, B_m) - \min_{m \colon y_m \neq y_{m^\star}} {\rm dist}(x, B_m).
\end{equation*}
The gradient of $\min$ w.r.t. its input $(c_1, \ldots, c_M)$ is extremely sparse\footnote{$\nabla_c \min (c_1, c_2, \ldots, c_m) = (0, \ldots, 0, 1, 0, \ldots, 0) = e_{j^\star}$, where $e_j$ is the $j^{\rm th}$ standard basis vector, and $j^\star = \argmin_j c_j$.}%
, and hence a very small number of parameters $\theta_i$ are updated at each step of gradient descent using gradients of \eqref{eq:sparse}. As a result, optimization is extremely slow. We remedy this by using a soft approximation to $\min$ which has dense gradients, 
\begin{equation}
    {\rm min}_\tau \{c_1, \ldots, c_M\} \overset{\rm def}{=} \sum_{m = 1}^M c_m \frac{\exp (-\tau c_m)}{\sum_j \exp (-\tau c_j)} \label{eq:softmin},
\end{equation}
where $\tau$ is a parameter that approximately controls the sparsity of the gradients. The function $\min_\tau$ is equal to $\min$ in the limit $\tau \to \infty$, and reduces to the average when $\tau = 0$. 
This step is crucial for the performance of our method. 

Furthermore, we find that for many data points $x_i$, a small number of boxes $m$ contribute a lot to the final loss due to large distances ${\rm dist}(x_i, B_m)$. As a result, learning is slow for parameters corresponding to the remaining boxes. To prevent such imbalance, we clip the certificates to $50$. 
With these approximations, we obtain 
\begin{equation*}
L_2(\theta) = \frac{1}{N} \sum_{i = 1}^N \1[f_\theta(x_i) = y_i] \cdot \tilde{C}_\theta(x_i),
\end{equation*}
where $\tilde{C}_\theta(x)$ is defined as
\begin{equation}
\min \left(\underset{m}{ {\rm min}_\tau } \  {\rm dist}(x, B_m) - \underset{m \colon y_m \neq y_{m^\star}}{ {\rm min}_\tau } {\rm dist}(x, B_m), 50 \right). \label{eq:indicator}
\end{equation}

\paragraph{Relaxing Indicator Functions} Now observe that $L_2$ is still a function of indicator functions, due to the ${\rm dist}$ function in \eqref{eq:indicator}, which was derived in \cref{lem:l0dist} to be ${\rm dist}(x, B(a, b)) = \sum_{i = 1}^n \1\left(x_i \not \in [a_i, b_i] \right)$. Again, as the gradients of $\1[\cdot]$ are zero almost everywhere, we perform a conical approximation to $\1\left(x_i \not \in [a_i, b_i] \right)$ which has non-zero gradients:
\begin{equation*}
     {\rm conical}(x, a_i, b_i) \overset{\rm def}{=} \max(a_i - x, 0) + \max(x - b_i, 0).
\end{equation*}

Finally, we replace the indicator $\1[f_\theta(x_i) = y_i]$ in $L_2$ by $s_i$, where $s_i = +1$ if $f(x_i) = y_i$, and $s_i = -1$ otherwise, to have the misclassified data-points contribute to the loss. These modifications lead to our final objective $L(\theta)$. 

\paragraph{Improving Initialization} We initialize $\theta$ by using a set of boxes defined from the data. This is done by first drawing a subset $T$ of size $M$ uniformly at random from the training data-points, and then initializing $\theta$ with axis-aligned boxes centered at these data-points, as $\{(B(x - 0.1, x + 0.1), y) \colon (x, y) \in T\}$, where $+$ denotes vector-scalar addition. Having described all the tricks used for optimizing \textsc{Box-NN}, we now proceed to performing an empirical evaluation.

\section{Empirical Evaluation} 
In this section, we will briefly describe existing methods for probabilistic $\ell_0$ certification, \citep{levine2020robustness} and \citep{jia2022almost} as well as deterministic $\ell_0$ certification \citep{hammoudeh2023feature}, and then empirically compare our (deterministic) $\ell_0$ certified defense $\textsc{Box-NN}$ to these approaches.
{\color{blue} 
}

\citet{levine2020robustness} and \citet{jia2022almost} extend the technique of randomized smoothing \citep{cohen2019certified} to randomized ablation (RA), where given any classifier $f$ (\eg, a neural network), they produce a smoothed classifier $g$ by zeroing out $k$ pixels uniformly at random:
\begin{equation}
g_{\rm RA}(x) = \argmax_k \bbP_{v \sim {\rm Unif}(S)} \left( f(x \odot v) = k \right), \label{eq:ablation}
\end{equation}
where $S = \{ v \in \{0, 1\}^n \colon \|v\|_0 = n - \rho\}$ is the discrete set of all binary vectors of length $n$ having exactly $\rho$ zeros, and $\odot$ denotes the Hadamard product. For this construction in \eqref{eq:ablation}, a counting argument leads to the robustness certificate 
in \cite{levine2020robustness}, which we compare to in \cref{fig:comprand}. 
A more complicated analysis based on the Neyman-Pearson lemma leads to a tighter certificate 
in \cite{jia2022almost}, which is also included in our comparison in \cref{fig:comprand2} (left).
Both these certificates are randomized, \ie, they hold with a confidence $1 - \alpha$, where $\alpha, \rho$ are hyper-parameters that trade-off benign accuracy to robustness, and can be chosen empirically. According to standard practice, we fix $\alpha = 0.05$ and produce plots for varying $\rho$. The interested reader can refer to  \citep{levine2020robustness, jia2022almost} for a detailed description of these certification procedures.

\begin{figure}[h!]
\centering
\includegraphics[width=0.4\textwidth]{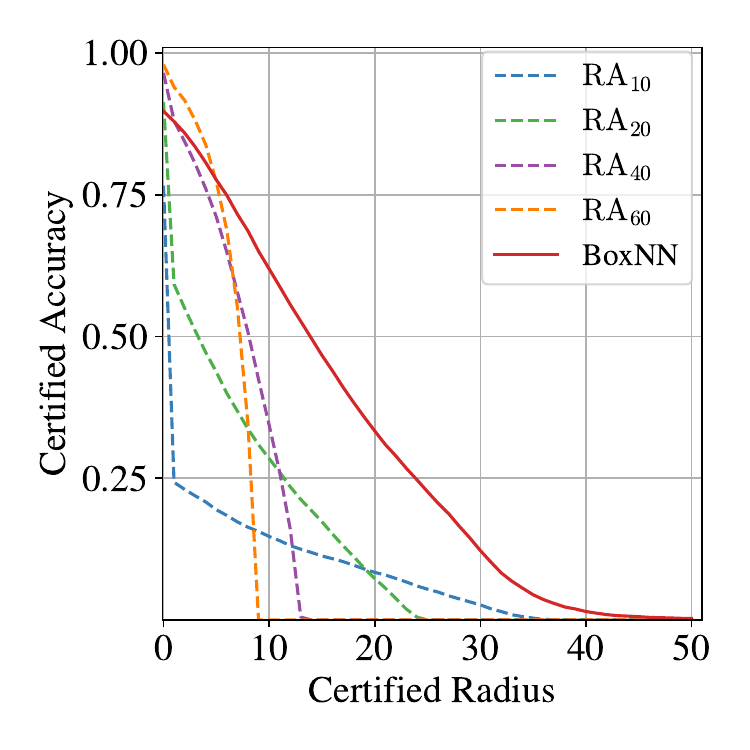} \hspace{-5mm}
\includegraphics[width=0.4\textwidth]{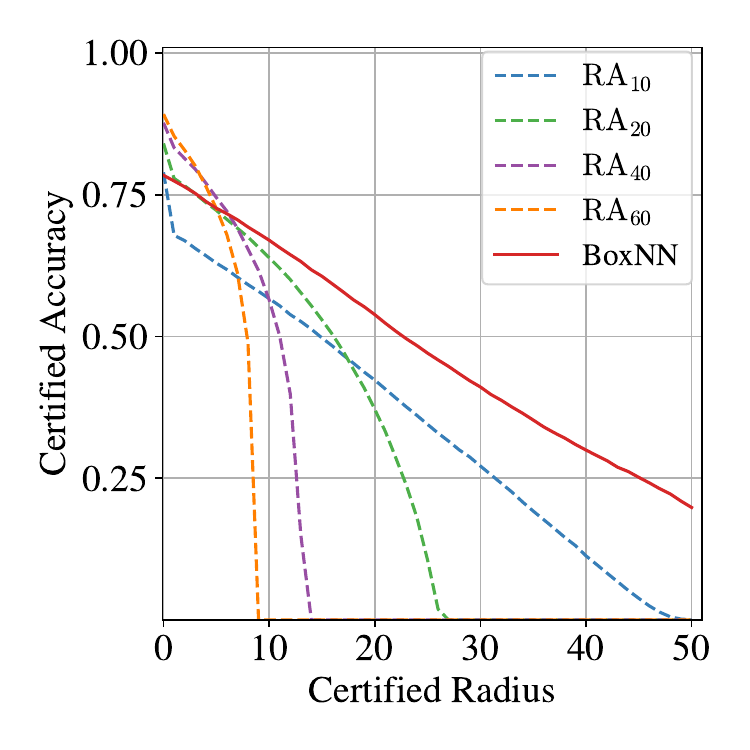}
\caption{Comparison of Randomized Ablation \citep{levine2020robustness} to our method \textsc{Box-NN} on the MNIST (left) and FashionMNIST (right) datasets. In each figure, the dotted lines correspond to different hyperparameter settings $\rho$. Details in text.}
\label{fig:comprand}
\end{figure}

\begin{figure}[h]
\centering
\includegraphics[width=0.4\textwidth]{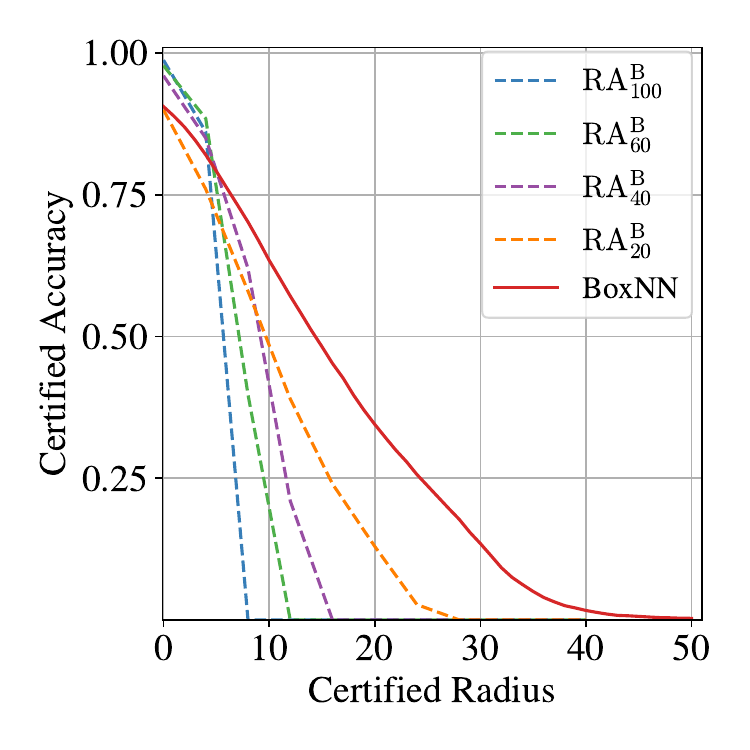} \hspace{-5mm}
\includegraphics[width=0.4\textwidth]{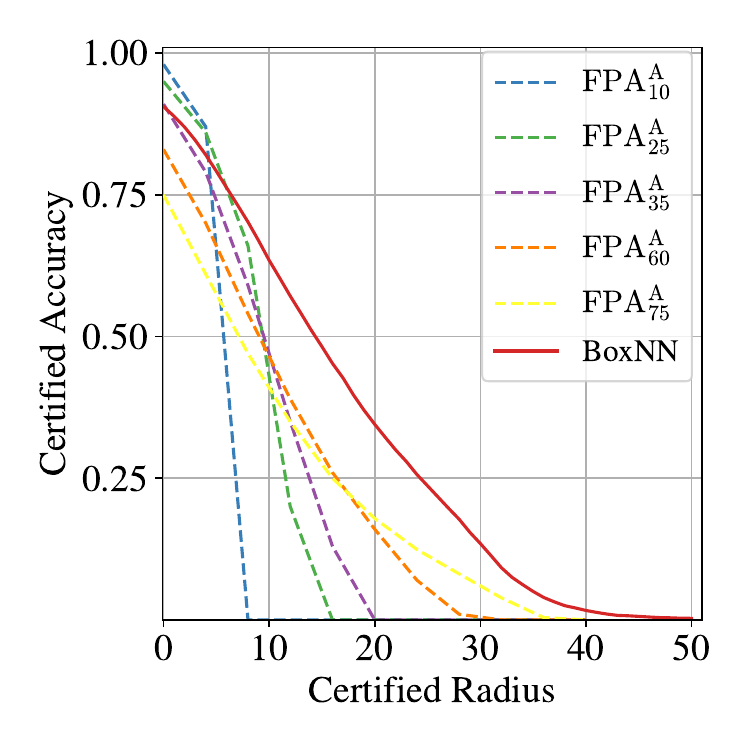}
\caption{Comparison of \cite{jia2022almost} (left) and \cite{hammoudeh2023feature} (right) to our method \textsc{Box-NN} on the MNIST dataset. The dotted lines correspond to different settings for the hyperparameter $\rho$. Details are mentioned in text.}
\label{fig:comprand2}
\end{figure}

More recently, given any classifier $f$, \citet{hammoudeh2023feature} produce a determinstic $\ell_0$ certified classifier $g$ by partitioning the set of pixels $\{1, 2, \ldots, n\}$ into disjoint partitions $\cS$, and then producing the majority prediction of $f$ over $\cS$:
\begin{equation}
g_{\rm FPA}(x) = {\rm Majority}\{f(x_S)\}_{S \in \cS}, \label{eq:ensemble}
\end{equation}
where $f(x_S)$ is defined as the prediction of $f$ obtained after zeroing out the pixels in $x$ not in $S$. \citet{hammoudeh2023feature} then produce a certificate by counting the difference in the votes of the majority label to the runner-up label in \eqref{eq:ensemble}. In \cref{fig:comprand2} (right), we compare to the best performing strategy for constructing $\cS$ in \citep{hammoudeh2023feature} named ``strided'', where equally spaced pixels are selected for each partition, \ie, $\cS = \{p \colon p \equiv t - 1 \text{ mod } \rho \}_{t = 0}^{\rho - 1}$. Here $\rho$ is a hyper-parameter as earlier, and we vary $\rho$ to produce the plots in \cref{fig:comprand2}\footnote{We use the results reported in \citet[Table 27]{hammoudeh2023feature} given that no public implementation of the method is available, to the best of our knowledge.}. Note that \citep{hammoudeh2023feature} also obtain an improved certificate by using an aggregation more complicated than the majority vote, which we compare to in \cref{fig:compdet}. The interested reader can refer to \cref{app:addlcomp} and \citep{hammoudeh2023feature} for more details.

\textbf{Results}\  Recall from \cref{sec:empirics} Eq. \eqref{certacc} that the certified accuracy of a classifier $g$ against $\epsilon$-bounded adversarial perturbations, ${\rm CertAcc}(g, \epsilon)$, can be obtained given a point-wise certificate $C$ for $g$. For each of the methods described so far, we plot ${\rm CertAcc}$ against $\epsilon$ using the corresponding robust classifier $g$ and the certificate $C$ over samples from the test set of the datasets mentioned. 

 \begin{table}[t]
\caption{Comparison of the median certified radius $\bar r$ obtained by our \textsc{Box-NN} to the best hyperparameter settings for prior work.}
\label{tab:median}
\vskip 0.03in
\begin{center}
\begin{small}
\begin{tabular}{cccc}
\toprule
\textsc{Dataset} & \textsc{Method} & $\bar r$ \\
\midrule
\multirow{5}{*}{MNIST}  & \textsc{Box-NN} & $\mathbf{13}$\\
& RA \cite{levine2020robustness} & $8$ \\
& ${\rm RA}^{\rm B}$ \cite{jia2022almost} & $10$  \\
& \makecell{${\rm FPA}^{\rm A}$ \\ \cite{hammoudeh2023feature}} & $9$  \\
& \makecell{${\rm FPA}^{\rm B}$ \\ \cite{hammoudeh2023feature}} & $12$ \\
\midrule
\multirow{2}{*}{\makecell{FMNIST}}  & \textsc{Box-NN} & $\mathbf{22}$ \\
& RA \cite{levine2020robustness} & 16 \\
\bottomrule
\end{tabular}
\end{small}
\end{center}
\vskip -0.2in
\end{table}
 

 A commonly used metric for comparing certified accuracy curves adopted in the literature \citep{levine2020robustness,jia2022almost,hammoudeh2023feature} is the median certified radius, which is the largest perturbation strength under which a classifier is certified to have atleast $50\%$ robust accuracy. As can be seen in \cref{tab:median}, our method \textsc{Box-NN} \emph{outperforms all existing methods under all hyperparameter settings on this metric}. 

The median certified radius captures a small slice of the full certified accuracy curve, which provides a complete picture. Observe that the dotted curves in \cref{fig:comprand,fig:comprand2} remain lower than our red curve except at small attack strengths. This shows that \textsc{Box-NN} is able to produce better certificates at most radii, and trades-off robustness at higher radii for benign accuracy at small radii. Without any dedicated hyper-parameter tuning, \textsc{Box-NN} dominates any single dotted curve for a large range of attack strengths, demonstrating that certified defenses closely utilizing properties of the data-distribution can outperform complicated ensembling-based defenses which ignore properties of the data. 

\section{Conclusion, Limitations and Future Work}
In this work, we developed a theoretical to exploit properties of the data distribution for robustness against sparse adversarial attacks. We showed that data \red{localization} -- the property that a data distribution $p$ places most of its mass on very small volume sets in the input space -- characterizes the existence of a $\ell_0$-robust classifier for $p$. Following this theory, we developed a defense against sparse adversarial attacks, and derived a corresponding robustness certificate. We showed that this certificate 
empirically improves upon existing state-of-the-art in several broad regimes. 

The primary limitation of our work is the difficulty in efficiently learning classifiers that have axis-aligned decision regions. While we are able to successfully employ several optimization tricks for datasets like MNIST and Fashion MNIST, the task becomes harder on more complicated datasets, \emph{even though the geometry required for the underlying data-distribution remains the same due to our general theoretical results}. These optimization difficulties mostly stem from the strict requirement of axis-aligned boxes for our distance computation in \cref{lem:l0dist}. In the future, we hope to trade-off efficiency in the distance computation in favor of richer decision boundaries that can be learnt efficiently and generalize well.


\bibliography{refs_l0conc}
\bibliographystyle{icml2024}

\newpage
\appendix
\onecolumn
\section{Auxilliary Lemmas and Proofs} \label{app:proofs}
\begin{lemma}[Properties of expansion and contraction, extending \citet{pal2023concentration}]
    \label{lem:expandcontract}
    For a distance $d$, set $A \subseteq [0, 1]^n$, define $A^{+\epsilon} = \{x \in [0, 1]^n \colon {\rm dist}_d(x, A) \leq \epsilon\}$, and $A^{-\epsilon} = \{ x \in [0, 1]^n \colon B_d(x, \epsilon) \subseteq A\}$. Then, for $N, O \subseteq [0, 1]^n$, we have 
    \begin{enumerate}
        \item $(N \cap O)^{-\epsilon} = N^{-\epsilon} \cap O^{-\epsilon}$
        \item $(N^c)^{-\epsilon} = (N^{+\epsilon})^c$, where $c$ denotes complement in $[0, 1]^n$
        \item $(N \setminus O)^{-\epsilon} = N^{-\epsilon} \setminus O^{+\epsilon}$
        \item $(N \cup O)^{+\epsilon} = N^{+\epsilon} \cup O^{+\epsilon}$
        \item $(N^{+\epsilon_1})^{+\epsilon_2} \subseteq N^{+(\epsilon_1 + \epsilon_2)}$
    \end{enumerate}
\end{lemma}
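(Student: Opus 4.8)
The plan is to prove items 1 and 2 directly from the definitions — these are the ``atomic'' facts — then obtain items 3 and 4 by routine set algebra, and finally prove item 5 using the triangle inequality of $d$, which is the only place the metric structure of the space is actually used.

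For item 1, I would unwind both sides: $x \in (N \cap O)^{-\epsilon}$ says $B_d(x,\epsilon) \subseteq N \cap O$, which is equivalent to ($B_d(x,\epsilon) \subseteq N$ and $B_d(x,\epsilon) \subseteq O$), i.e.\ $x \in N^{-\epsilon} \cap O^{-\epsilon}$. For item 2, the point is to pass from a contraction of a complement to a statement about distances: $x \in (N^c)^{-\epsilon}$ iff $B_d(x,\epsilon) \subseteq N^c$, iff $B_d(x,\epsilon) \cap N = \emptyset$, iff no point of $N$ is within $d$-distance $\epsilon$ of $x$, iff ${\rm dist}_d(x,N) > \epsilon$, iff $x \notin N^{+\epsilon}$, iff $x \in (N^{+\epsilon})^c$; the degenerate case $N = \emptyset$ is checked separately, both sides being all of $[0,1]^n$. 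One subtlety worth a sentence: this chain uses that, for the distances of interest — in particular the integer-valued $\ell_0$ distance with integer radius $\epsilon$ — the infimum defining ${\rm dist}_d$ is attained whenever $N \neq \emptyset$, so ``$B_d(x,\epsilon)$ misses $N$'' and ``${\rm dist}_d(x,N) > \epsilon$'' are literally the same condition.

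Items 3 and 4 then fall out. For item 3, write $N \setminus O = N \cap O^c$; item 1 gives $(N \setminus O)^{-\epsilon} = N^{-\epsilon} \cap (O^c)^{-\epsilon}$, and item 2 rewrites $(O^c)^{-\epsilon} = (O^{+\epsilon})^c$, so $(N \setminus O)^{-\epsilon} = N^{-\epsilon} \cap (O^{+\epsilon})^c = N^{-\epsilon} \setminus O^{+\epsilon}$. For item 4, I would argue directly: $x \in (N \cup O)^{+\epsilon}$ iff ${\rm dist}_d(x, N \cup O) \leq \epsilon$, and since ${\rm dist}_d(x, N \cup O) = \min\{{\rm dist}_d(x,N), {\rm dist}_d(x,O)\}$ this holds iff ${\rm dist}_d(x,N) \leq \epsilon$ or ${\rm dist}_d(x,O) \leq \epsilon$, i.e.\ $x \in N^{+\epsilon} \cup O^{+\epsilon}$ (equivalently one could derive item 4 from items 1--2 by De Morgan, but the direct route is shorter).

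For item 5, let $x \in (N^{+\epsilon_1})^{+\epsilon_2}$, so ${\rm dist}_d(x, N^{+\epsilon_1}) \leq \epsilon_2$; for any $\eta > 0$ choose $y \in N^{+\epsilon_1}$ with $d(x,y) \leq \epsilon_2 + \eta$ and then $z \in N$ with $d(y,z) \leq \epsilon_1 + \eta$, so that $d(x,z) \leq \epsilon_1 + \epsilon_2 + 2\eta$ by the triangle inequality, giving ${\rm dist}_d(x,N) \leq \epsilon_1 + \epsilon_2 + 2\eta$; letting $\eta \downarrow 0$ yields $x \in N^{+(\epsilon_1 + \epsilon_2)}$. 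I do not expect a real obstacle here: the argument is elementary, and the only things demanding care are that every expansion, contraction and complement be taken relative to the ambient cube $[0,1]^n$ (so that item 2 is an equality, not merely an inclusion) and the infimum-versus-minimum point flagged above, which is vacuous for the $\ell_0$ distance actually used in the paper.
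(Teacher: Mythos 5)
Your proof is correct and follows essentially the same route as the paper's: items 1--3 by the identical unwinding-plus-De-Morgan chain, and item 5 by the same triangle-inequality argument (your $\eta$-slack handling of a possibly non-attained infimum, and the attainment remark in item 2, are careful refinements the paper elides by working with the existential ball characterization $N^{+\epsilon}=\{x\colon \exists x'\in N,\ d(x,x')\le\epsilon\}$). The only cosmetic difference is item 4, which you prove directly via $\mathrm{dist}_d(x,N\cup O)=\min\{\mathrm{dist}_d(x,N),\mathrm{dist}_d(x,O)\}$ while the paper derives it from items 1--2 by complementation --- both are fine.
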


\begin{proof} The first four assertions of this Lemma are standard results in mathematical morphology, dealing with the erosion and dilation of sets, and are reproduced here from \citet{pal2023concentration} for clarity. 

\begin{enumerate}
\item Let $M = N \cap O$. 
\begin{align*}
M^{-\epsilon} &= \{x \colon x \in M, B_d(x, \epsilon) \subseteq M\} \\
&= \{x \colon x \in N, x \in O, B_d(x, \epsilon) \subseteq N, B_d(x, \epsilon) \subseteq O\} = N^{-\epsilon} \cap O^{-\epsilon}.
\end{align*}

\item Let $M = N^c$. 
\begin{align*}
M^{-\epsilon} &= \{x \colon x \in M, B_d(x, \epsilon) \subseteq M\} = \{x \colon x \not \in N, B_d(x, \epsilon) \subseteq N^c\} \\
&= \{x \colon x \not \in N, \forall x' \in B_d(x, \epsilon) \  x' \not \in N\} \\
&= \{x \colon \forall x' \in B_d(x, \epsilon) \  x' \not \in N\} \\
\implies (M^{-\epsilon})^c &= \{x \colon \exists x' \in B_d(x, \epsilon) \ x' \in N\} \\
&= N^{+\epsilon}.
\end{align*}

\item Let $M = N \setminus O$, we have $M^{-\epsilon} = (N \cap O^c)^{-\epsilon} = N^{-\epsilon} \cap (O^c)^{-\epsilon}$ by Property 1, and then $N^{-\epsilon} \cap (O^c)^{-\epsilon} = N^{-\epsilon} \cap (O^{+\epsilon})^c$ by Property 2. 

\item Let $M = N \cup O$. We have $M^c = N^c \cap O^c$. Taking $\epsilon$-contractions, and applying the first and second properties, we get 
$M^{+\epsilon} = N^{+\epsilon} \cup O^{+\epsilon}$.

\item For a set $M$, and any $\epsilon_1 \geq 0, \epsilon_2 \geq 0$, we have 
\begin{equation*}
\left(M^{+\epsilon_1}\right)^{+\epsilon_2} \subseteq M^{+(\epsilon_1 + \epsilon_2)}.
\end{equation*}
The above property can be derived from the triangle inequality applied to $d$, as 
\begin{align*}
\left(M^{+\epsilon_1}\right)^{+\epsilon_2} 
&= \{ x \colon \exists x' \in M^{+\epsilon_1}, \  d(x', x) \leq \epsilon_2 \} \\
&= \{ x \colon \exists x' \in \cX, x'' \in M, \ d(x', x) \leq \epsilon_2, d(x'', x') \leq \epsilon_1 \} \\
&\subseteq \{ x \colon \exists x'' \in M, \  d(x'', x) \leq \epsilon_2 + \epsilon_1\} = M^{+(\epsilon_1 + \epsilon_2)}.
\end{align*}
\end{enumerate}
\end{proof}

\lzerodist*
\begin{proof}
For any given $x$, recall the definition of ${\rm dist}$ to be ${\rm dist}(x, B(a, b)) = \min_{y \in B(a, b)} \|x - y\|_0$. For any $y \in B(a, b)$ we have, 
\begin{align}
\|x - y\|_0 = \sum_{i = 1}^n \1(x_i \neq y_i) \geq \sum_{i = 1}^n \1(x_i \not \in [a_i, b_i]) \1(y_i \in [a_i, b_i]) = \sum_{i = 1}^n \1(x_i \not \in [a_i, b_i]) 
\end{align}
The above implies $\min_{y \in B(a, b)} \|x - y\|_0 \geq \sum_{i = 1}^n \1(x_i \not \in [a_i, b_i])$. Then, consider $y^\star \in B(a, b)$ defined as
\begin{equation}
y^\star_i = \begin{cases}
a_i &\text{ if } x_i \not \in [a_i, b_i] \\
x_i &\text{ otherwise }
\end{cases}.
\end{equation}
We have $\|y^\star - x\|_0 = \sum_{i = 1}^n \1(x_i \not \in [a_i, b_i])$, which attains the lower bound on ${\rm dist}(x, B(a, b))$. The result follows.
\end{proof}

\thcert*
\begin{proof}
Let $x, x' \in \cX$. Define $\cB_1 = \{B_m \colon y_m = y_{m^\star}\}$, and $\cB_2 = \{B_m \colon y_m \neq y_{m^\star}\}$. Further, define $\bar d_1, \bar d_2$ as
\begin{equation*}
d_1(x') = \min_{B \in \cB_1} {\rm dist}(x', B), \quad d_2(x') = \min_{B \in \cB_2} {\rm dist}(x', B),
\end{equation*}
Our goal would be to demonstrate that as long as $\|x - x'\|_0 < {\rm margin}(x)/2$, we have $d_2(x') > d_1(x')$, implying that the prediction remains the same at $x'$. Consider any $B \in \cB_2$, and apply the triangle inequality to get 
\begin{align}
{\rm dist}(x', B) + \|x - x'\|_0 &\geq {\rm dist}(x, B), \label{eq:triangle1} 
\end{align}
where \eqref{eq:triangle1} can be seen as 
\begin{align}
{\rm dist}(x', B) + \|x - x'\|_0 = \min_{y \in B} \|y - x'\|_0 + \|x' - x\|_0 \geq \min_{y \in B} \|y - x\|_0 = {\rm dist}(x, B).
\end{align}
Further, taking a minimum on both sides of \eqref{eq:triangle1} over all $B \in \cB_2$ leads to
\begin{align}
d_2(x') + \|x - x'\|_0 \geq d_2 \label{eq:s1}
\end{align}
Similarly, consider any $B \in \cB_1$, and apply the triangle inequality to get
\begin{align}
{\rm dist}(x, B) + \|x - x'\|_0 \geq {\rm dist}(x', B), \label{eq:triangle2}. 
\end{align}
Taking a minimum over both sides of \eqref{eq:triangle2} over all $B \in \cB_1$ leads to
\begin{align}
d_1 + \|x - x'\|_0 \geq d_1(x'). \label{eq:s2}
\end{align}
Adding \eqref{eq:s1} and \eqref{eq:s2}, we have
\begin{align}
d_2(x') - d_1(x') + 2 \|x - x'\|_0 &\geq d_2 - d_1 \\
\implies d_2(x') - d_1(x') &\geq {\rm margin}(x) - 2 \|x - x'\|_0, 
\end{align}
from where we can see that $d_2(x') - d_1(x') > 0$ whenever $\|x - x'\|_0 < {\rm margin}(x) / 2$, as required.
\end{proof}

\section{Additional Empirical Comparison} \label{app:addlcomp}
We produce an additional comparison to $\ell_0$ certificates in \cite{hammoudeh2023feature}. Since there is no publicly available code for this method, we compare our method against the numbers reported in \citet[Table 27]{hammoudeh2023feature}. In \cref{fig:compdet}, we compare against the method ``FPA with run-off elections'' reported in \cite{hammoudeh2023feature}. This method uses a more complicated aggregation scheme on top of \cref{eq:ensemble} to obtain improved certificates. Nevertheless we observe that \textsc{Box-NN} improves upon the median certified robustness for all methods in all hyper parameter settings. 

\begin{figure}[ht!]
\centering
\includegraphics[width=0.4\textwidth]{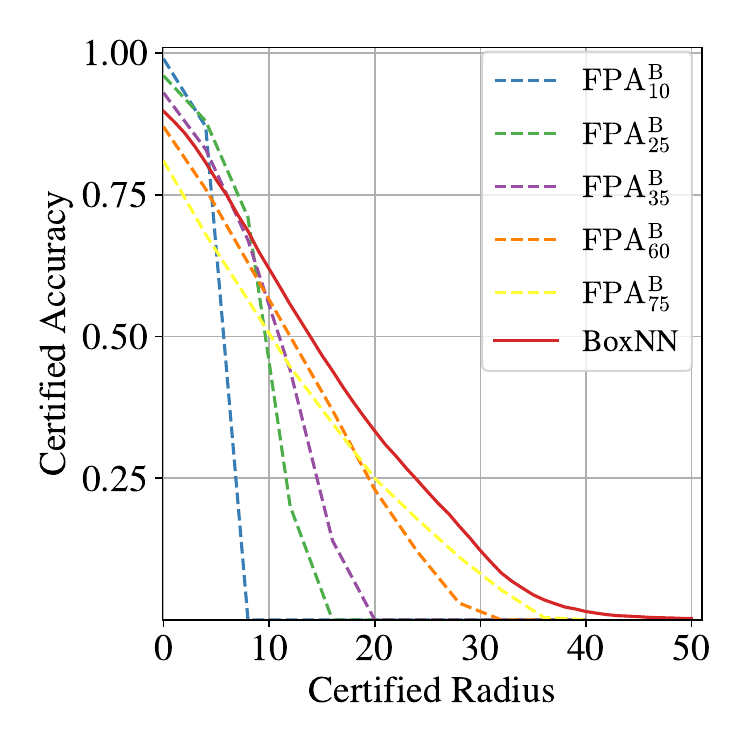}
\caption{Comparison of a deterministic certificate \cite{hammoudeh2023feature} (dotted lines) to our method  \textsc{Box-NN} (red line) on the MNIST dataset. The dotted lines correspond to different settings for the hyperparameter $\rho$. Details are mentioned in main text.}
\label{fig:compdet}
\end{figure}


\end{document}